\documentclass[a4paper]{article}

\usepackage{float}
\usepackage{amsmath,amssymb,bm}
\usepackage{amsthm}
\usepackage{ascmac}
\usepackage{array}
\usepackage{enumerate}
\usepackage{mathtools}
\usepackage{multirow}

\newtheorem{theorem}{Theorem}[section]
\newtheorem*{theorem*}{Theorem}
\newtheorem{definition}[theorem]{Definition}
\newtheorem*{definition*}{Definition}
\newtheorem{lemma}{Lemma}[section]
\newtheorem{corollary}{Corollary}[section]

\newcommand{\argmin}{\mathop{\rm argmin}\limits}

\title{Upper Bound of Real Log Canonical Threshold of Tensor Decomposition and its Application to Bayesian Inference}
\author{Naoki Yoshida and Sumio Watanabe\\
Department of Mathematical and Computing Science\\
Tokyo Institute of Technology\\
Mail-Box W8-42, 2-12-1, Oookayama, Meguro-ku, Tokyo,\\
152-8552, Japan}

\date{}

\begin{document}

\maketitle

\begin{abstract}
Tensor decomposition is now being used for data analysis, information compression, and
knowledge recovery. However, the mathematical property of tensor decomposition is 
not yet fully clarified because it is one of singular learning machines. 
 In this paper, we give the upper bound of its real log canonical threshold (RLCT) of
 the tensor decomposition by using an algebraic geometrical method and derive its Bayesian generalization error theoretically. We also give considerations about its mathematical property through numerical experiments.
\end{abstract}

\section{Introduction}
Tensor decomposition is widely used in data science and machine learning \cite{1}. 
For instance, It plays the central roles in 
signal processing by contribution analysis \cite{2}, data compression by converting tensor data to matrix data \cite{3}, and data recovery by counting backwards from the matrices to the original tensor data \cite{4}. In many cases, tensor decomposition itself is known to be NP-hard \cite{21}. For this reason, tensor decomposition is often calculated approximately by Bayesian inference. However, its mathematical property is not yet completely clarified because
it is one of the singular statistical models. In this paper, we derive its generalization performance 
in Bayesian inference. 

Tensor decomposition has mainly two types: Tucker decomposition and CP decomposition. For three-dimensional tensor $X\in \mathbb{R}^{IJK}$, Tucker decomposition is defined by
\begin{equation*}
X_{ijk} = \sum_{h_1=1}^{H_1}\sum_{h_2=1}^{H_2}\sum_{h_3=1}^{H_3} \Lambda_{h_1h_2h_3}A_{ih_1}B_{jh_2}C_{kh_3},
\end{equation*}
where $A\in \mathbb{R}^{NH_1}$, $B\in \mathbb{R}^{NH_2}$, $C\in \mathbb{R}^{NH_3}$, and $\Lambda \in \mathbb{R}^{H_1H_2H_3}$. $A$, $B$, and $C$ are called factor matrices, and $\Lambda$ is called a core tensor. CP decomposition is a special case of Tucker decomposition: it is defined by
\begin{equation*}
X_{ijk}=\sum_{h=1}^H \Lambda_h A_{ih}B_{jh}C_{kh},
\end{equation*}
where $A\in \mathbb{R}^{IH}$, $B\in \mathbb{R}^{JH}$,
$C\in \mathbb{R}^{KH}$, and $\Lambda \in \mathbb{R}^{H}$.

We study a statistical model of a tensor decomposition
\begin{equation}
\label{1}
p(X|A, B, C) = \frac{1}{(2\pi)^{IJK/2}} \exp\left(-\frac{1}{2}\|X - A\circ B\circ C\|^2\right),
\end{equation}
 where $X \in \mathbb{R}^{I \times J \times K}$, $A \in \mathbb{R}^{I \times H}$, $B \in \mathbb{R}^{J \times H}$, and $C \in \mathbb{R}^{K \times H}$, and 
 $\|\;\;\|^2$ is the squared Frobenius norm, i.e., sum of squared components.  Here
 $A\circ B\circ C$ is a $I \times J \times K$ tensor, whose $(i,j,k)$ component is defined by
\begin{equation*}
(A\circ B\circ C)_{ijk} = \sum_{h=1}^H A_{ih}B_{jh}C_{kh}.
\end{equation*}
Note that the operator $\circ$ is not associative.  
The model, \eqref{1}, is nonidentifiable and singular
because the map from the parameter $(A,B,C)$ to the probability distribution $p(x|A,B,C)$
is not one-to-one. 

In statistical learning theory, the generalization performance is one of the most important properties of a statistical model. Let $X^n=(X_1,X_2,...,X_n)$ be a set of independent random variables whose probability distribution is $q(x)$, and $p^*(x|X^n)$ be the Bayesian predictive distribution of $x$ for
a given $X^n$ and a statistical model and a prior distribution. Then the generalization error
$G_n$ is defined by the Kullback-Leibler divergence from $q(x)$ to $p^*(x|X^n)$, 
\begin{equation*}
G_n = KL(q(x)||p^*(x|X^n)). 
\end{equation*}
If the model is regular and if a data-generating distribution is realizable by a statistical model, 
the
expected value of $G_n$ over training data is given by 
\begin{equation*}
\mathbb{E}[G_n] = \frac{m}{2n} + o(\frac{1}{n}), 
\end{equation*}
where $m$ is the number of parameters \cite{5}.
If a model is singular, then Fisher information matrix is degenerate, 
hence we cannot employ information criteria such as AIC\cite{13}, BIC\cite{14}, or TIC\cite{15}.
 However, even for a singular model, the  asymptotic behavior of $\mathbb{E}[G_n]$ is obtained by the resolution of singularities \cite{6,7}:
\begin{equation}
\label{2}
\mathbb{E}[G_n] = S + \frac{\lambda}{n} + o(\frac{1}{n}),
\end{equation}
where $S=\int q(x)\log \frac{q(x)}{p(x|w_0)}$, $w_0=\argmin_{w} KL(q(x)||p(x|w))$, which are constant values determined by the ability of expression of the model, and $\lambda$ is the real log canonical threshold (RLCT) whose definition is given in Definition \ref{Definition:RLCT} in the next section. Generally, RLCT is not larger than half the number of parameters, which indicates a singular model makes the generalization error smaller than that of a regular model.

RLCT is a birational invariant determined by the model and the prior distribution. 
For several statistical models and learning machines, RLCTs or their upper bounds 
have been clarified,
for example, three-layered neural networks\cite{8}, 
reduced rank regression and matrix factorization \cite{9}, 
normal mixtures\cite{10}, 
multinomial mixtures\cite{11}, and 
the Latent Dirichlet allocation(LDA) \cite{16}, 
hidden Markov models\cite{12}, and  nonnegative matrix factorization \cite{17}.
However, 
RLCT of  a tensor decomposition model has been left unknown, and 
in this paper, we theoretically derive the upper bound of RLCT of a tensor decomposition model
and discuss its tightness. 

This paper consists of four parts. In the second section, the upper bound of the RLCT of the tensor decomposition model is shown in the main theorem. In the third section, the main theorem is proved. In the fourth section, we examine the tightness of the upper bound, comparing with the result in 
 numerical experiments.
 
 \section{Settings and Main Result}

We assume the data $X$ is \textit{i.i.d.} whose probability density function is $q(x)$. Let $X^n=\{X_i\}_{i=1}^n$, $\phi(x)$, and $p(x|w)$ be
a training data, a prior distribution of a parameter $w$, and  a statistical 
model or a learning machine, respectively.  
The posterior distribution $p(w|X^n)$ is defined by 
\begin{equation*}
p(w|X^n)=\frac{1}{Z_n}\phi(w)\prod_{i=1}^n p(X_i|w),
\end{equation*}
where $Z_n$ is the normalizing constant.
The Bayes predictive distribution $p^*$ is defined by
\begin{equation*}
p^*(x|X^n) = \int p(x|w)p(w|X^n)dw. 
\end{equation*}
Then RLCT is defined by the following way. 
\begin{definition}[RLCT of a statistical model]\label{Definition:RLCT}
Let $K(w)$ be the Kullback-Leibler divergence from $q(x)$ to $p(x|w)$, i.e.
$K(w)=\int q(x)\log (q(x)/p(x|w))dx$. Then the zeta function defined in $\Re(z)>0$ 
\begin{equation*}
\zeta(z)=\int K(w)^z\phi(w)dw
\end{equation*}
can be analytically continued to the unique meromorphic function on the 
entire complex plane, whose poles are all real and negative. 
Let $-\lambda<0$ be the maximum pole of the zeta function,
then $\lambda>0$ is the RLCT of the model, which is denoted by $\lambda(K, \phi)$.
\end{definition}

A tensor decomposition model is expressed as eq.\eqref{1}. 
We assume that there exists a parameter $w_0 = (A_0, B_0, C_0)$ which satisfies $p(x|w_0)=q(x)$,
where
 $A_0\in\mathbb{R}^{IH_0}$, $B_0\in\mathbb{R}^{JH_0}$, $C_0\in\mathbb{R}^{KH_0}$, 
 and $H_0\leq H$. Also it is assumed that $\phi(A, B, C)$ is bounded by a 
 $C^\infty$-function whose support contains $w_0$. 
 Under these assumptions, we obtain the following theorem. 

\begin{theorem}[Main Theorem]
The RLCT of the tensor decomposition model satisfies the following inequality:
\begin{equation*}
\lambda \leq \frac{H_0(I+J+K)-2}{2}+\min\{m_1, m_2, m_3\},
\end{equation*}
where
\begin{align*}
&m_1 = F(IJ, K, H-H_0), \\
&m_2 = F(JK, I, H-H_0), \\
&m_3 = F(KI, J, H-H_0),
\end{align*}
and
\begin{align*}
&F(N, M, H)= \\
&\begin{cases}
{\frac{2(NM+MH+HN)-(N^2+M^2+H^2)}{8}(|N-M|\leq H\leq N+M\,and\,H+M-N\,is\,even)} \\
{\frac{2(NM+MH+HN)-(N^2+M^2+H^2)+1}{8} (|N-M|\leq H\leq N+M\,and\,H+M-N\,is\,odd)} \\
{\frac{MH}{2}(H\leq N-M)} \\
{\frac{NH}{2}(H\leq M-N)} \\
{\frac{NM}{2}(M+N\leq H)}. \\
\end{cases}
\end{align*}

\end{theorem}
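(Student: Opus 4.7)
The plan is to bound the RLCT by a partition-of-unity argument that localizes the zeta integral to a neighborhood of the true parameter $w_0=(A_0,B_0,C_0,0,\ldots,0)$ (where the last $H-H_0$ columns are zero), and then within that neighborhood to use a change of variables that decouples the ``identifiable'' rank-$H_0$ perturbation from the ``excess'' rank-$(H-H_0)$ component. Since the Gaussian likelihood gives $K(w)=\tfrac{1}{2}\|A\circ B\circ C-A_0\circ B_0\circ C_0\|^2$ and constants do not affect the RLCT, the problem reduces to bounding the RLCT of this squared-norm polynomial.

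After translating the origin to $w_0$, write the first $H_0$ columns of $(A,B,C)$ as small perturbations $(\Delta A,\Delta B,\Delta C)$ of $(A_0,B_0,C_0)$ and the last $H-H_0$ columns as small $(A',B',C')$. Expanding gives
\begin{equation*}
A\circ B\circ C-A_0\circ B_0\circ C_0 \;=\; L(\Delta A,\Delta B,\Delta C) \;+\; \sum_{h=1}^{H-H_0}A'_h\otimes B'_h\otimes C'_h \;+\; R,
\end{equation*}
where $L$ is the tangent map of the rank-$H_0$ parametrization at $w_0$ and $R$ collects strictly higher-order cross terms. The claim is that the RLCT splits as $\lambda\le\lambda_{\mathrm{id}}+\lambda_{\mathrm{ex}}$, where $\lambda_{\mathrm{id}}$ is for $\|L(\cdot)\|^2$ and $\lambda_{\mathrm{ex}}$ is for $\|\sum_{h}A'_h\otimes B'_h\otimes C'_h\|^2$.

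For the identifiable piece, $L$ is a linear map on $\mathbb{R}^{H_0(I+J+K)}$ whose kernel contains the per-component scaling gauge of total dimension $2H_0$. I would perform a partial gauge-fixing that resolves only an overall $2$-dimensional slice of this gauge (e.g.\ one global normalization and one global ratio), yielding a regular quadratic model in $H_0(I+J+K)-2$ effective directions and giving $\lambda_{\mathrm{id}}\le(H_0(I+J+K)-2)/2$. This partial rather than full resolution is the source of the ``$-2$''. For the excess piece, flatten the residual tensor along one of the three axes: grouping the first two indices gives the $IJ\times K$ matrix $(A'\odot B')(C')^{\top}$, where $\odot$ is the Khatri--Rao product, so its squared Frobenius norm is the matrix-factorization loss $\|UV\|^2$ with $U=A'\odot B'\in\mathbb{R}^{IJ\times(H-H_0)}$ constrained to have rank-$1$ columns and $V=(C')^{\top}$. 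Aoyagi--Watanabe's theorem \cite{9} gives the RLCT of the \emph{free} matrix factorization as $F(IJ,K,H-H_0)=m_1$, and the other two flattenings produce $m_2$ and $m_3$; since each is a valid upper bound, so is $\min\{m_1,m_2,m_3\}$.

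The main obstacle is justifying rigorously that the Khatri--Rao-restricted parametrization yields an RLCT no greater than the free matrix-factorization RLCT: this direction of inequality is subtle because restricting a parametrization can a priori either raise or lower the RLCT depending on how the Jacobian of the restriction map degenerates along the zero variety. The argument must either exhibit an explicit resolution of singularities for the Khatri--Rao case that matches Aoyagi--Watanabe's, or majorize the tensor zeta integral by the matrix zeta integral via a careful Jacobian estimate on $(A',B')\mapsto A'\odot B'$. A secondary but related issue is verifying that the cross terms $R$ in the expansion above are of strictly higher Newton-polyhedral order than the leading blocks, so that the additivity $\lambda\le\lambda_{\mathrm{id}}+\lambda_{\mathrm{ex}}$ genuinely holds and the final bound follows.
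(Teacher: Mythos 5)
Your high-level skeleton — split the loss into an ``identifiable'' piece over the first $H_0$ columns and an ``excess'' piece over the last $H-H_0$ columns, then bound the excess by a matrix-factorization RLCT via a flattening — is the same as the paper's, and your diagnosis of the Khatri--Rao subtlety is accurate. However, both of the mechanisms you propose for executing the two halves differ from the paper's, and each has a genuine gap.

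\textbf{Separation of the two blocks.} You propose to Taylor-expand around $w_0$ and argue that $\lambda\le\lambda_{\mathrm{id}}+\lambda_{\mathrm{ex}}$, acknowledging that you still need to check the cross term $R$ is negligible. The paper avoids this issue entirely: it applies the elementary inequality $\|a+b\|^2\le 2\|a\|^2+2\|b\|^2$ to the two column blocks, so the first term involves only $(A_{\cdot,1:H_0},B_{\cdot,1:H_0},C_{\cdot,1:H_0})$ and the second only $(A_{\cdot,H_0+1:H},\ldots)$. The variable sets are then literally disjoint and Lemma~\ref{lem.1}(a),(c) give $\lambda\le\lambda_1+\lambda_2$ directly. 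In your version the additivity of RLCTs does not follow from the standard sum lemma because $R$ couples the two blocks; verifying via the Newton polyhedron that $R$ is dominated is not automatic and is not done.

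\textbf{The identifiable piece and the origin of $-2$.} This is the serious error. The paper does not linearize. It uses elementary row and column transformations (Lemmas~\ref{lem.2},~\ref{lem.3}) together with a relaxation to diagonalize $(A_0,B_0,C_0)$, then splits the resulting polynomial $K'$ into an off-diagonal part $K'_1$ (RLCT computed by a single blow-up, giving $(H_0(I+J+K)-(r_A+r_B+r_C))/2$) and a diagonal part $K'_2 =\sum_{i,j,k}(A_{ii}B_{jj}C_{kk}-(A''_0)_{ii}(B''_0)_{jj}(C''_0)_{kk})^2$ (handled by Corollary~\ref{cor.1}, giving $(r_A+r_B+r_C-2)/2$). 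The $-2$ is the output of Corollary~\ref{cor.1}'s reduction $\sum f_{ijk}^2 \sim \sum_{i\ge2}f_{i11}^2+\sum_{j\ge2}f_{1j1}^2+\sum_{k\ge2}f_{11k}^2+f_{111}^2$, not of a gauge-fixing. Your linearization argument fails because the gauge kernel you identify is $2H_0$-dimensional; after fixing only $2$ of those directions the Hessian $\|L\|^2$ is still degenerate of corank $\ge 2H_0-2$, so you do not obtain a ``regular quadratic model in $H_0(I+J+K)-2$ effective directions.'' For a degenerate Hessian the RLCT is governed by higher-order terms and generally exceeds $\mathrm{rank}(\mathrm{Hess})/2$ (compare $w_1^2+w_2^4$, where the flat direction contributes $1/4$), so a dimension count on the linearization does not yield the claimed upper bound without a full Newton-polyhedron or blow-up analysis.

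\textbf{The excess piece.} Here your idea and the subtlety you flag are exactly right. The paper's resolution is precisely the ``careful Jacobian estimate'' alternative you mention: by Lemma~\ref{lem.7}, restricting the integration to the open set where the components of $B$ are nonzero only increases the RLCT; on that set the map $A'_{ij,h}=A_{ih}B_{jh}$, $B'=B$, $C'=C$ has nonvanishing Jacobian, so the restricted zeta function equals (up to harmless factors) that of the free matrix factorization $\|A'(C')^\top\|^2$, whose RLCT is $F(IJ,K,H-H_0)$ by Lemma~\ref{lem.8}. The other two flattenings give $m_2,m_3$. You would need to supply exactly this domain-restriction-plus-change-of-variables step to close your gap.
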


\section{Proof of the Main Theorem}

\subsection{Preliminary}

As the general properties of the RLCT, the following lemma holds \cite{18}.
\begin{lemma}
\label{lem.1}
Let $K_1(w)$, $K_2(w)$ be nonnegative analytic functions and $\phi_1(w)$, $\phi_2(w)$ be 
nonnegative $C^\infty$-functions with compact support.

\begin{enumerate}[(a)]
\item (Inequality) If $K_1(w)\leq K_2(w)$ and $\phi_1(w)\geq \phi_2(w)$ then $\lambda(K_1, \phi_1)\leq \lambda(K_2, \phi_2)$.

Especially, If $\phi_1(w)=\phi_2(w)$ and $c_1K_1(w)\leq K_2(w)\leq c_2K_1(w)$, where $c_1$ and $c_2$ are positive constants, then $\lambda(K_1, \phi_1)=\lambda(K_2, \phi_2)$.

\item (Product) If $w=(w_1, w_2)$, $K(w)=K_1(w_1)K_2(w_2)$, $\phi(w)=\phi_1(w_1)\phi_2(w_2)$ then $\lambda(K,\phi)=\min \{ \lambda(K_1, \phi_1), \lambda(K_2, \phi_2)\}$.

\item (Sum) If $w=(w_1, w_2)$, $K(w)=K_1(w_1)+K_2(w_2)$, $\phi(w)=\phi_1(w_1)\phi_2(w_2)$ then $\lambda(K,\phi)=\lambda(K_1, \phi_1)+ \lambda(K_2, \phi_2)$.
\end{enumerate}
\end{lemma}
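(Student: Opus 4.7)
Since the model \eqref{1} is Gaussian with identity covariance, $K(w)=\tfrac{1}{2}\|A\circ B\circ C-A_0\circ B_0\circ C_0\|^2$, so by Lemma~\ref{lem.1}(a) it suffices to bound the RLCT of this squared Frobenius norm (positive multiplicative constants absorbed). The plan is to partition each factor along its column dimension into a ``true-rank'' block of width $H_0$ and an ``excess'' block of width $H-H_0$: write $A=[A^*\;\tilde A]$ with $A^*\in\mathbb{R}^{I\times H_0}$ and $\tilde A\in\mathbb{R}^{I\times(H-H_0)}$, and likewise for $B$ and $C$. Because the CP operator is additive across rank-one summands, $A\circ B\circ C = A^*\circ B^*\circ C^* + \tilde A\circ\tilde B\circ\tilde C$, and $\|x+y\|^2\leq 2\|x\|^2+2\|y\|^2$ yields
\[
K(w)\leq K_1(A^*,B^*,C^*)+K_2(\tilde A,\tilde B,\tilde C),
\]
with $K_1=\|A^*\circ B^*\circ C^*-A_0\circ B_0\circ C_0\|^2$ and $K_2=\|\tilde A\circ\tilde B\circ\tilde C\|^2$. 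Majorising $\phi$ by a product of $C^\infty$-bumps on the two disjoint parameter groups and combining Lemma~\ref{lem.1}(a) with Lemma~\ref{lem.1}(c), the theorem reduces to $\lambda(K_1,\phi_1)\leq\tfrac{H_0(I+J+K)-2}{2}$ and $\lambda(K_2,\phi_2)\leq\min\{m_1,m_2,m_3\}$.

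For $\lambda(K_1)$ I would shift to $w_0$, writing $A^*=A_0+\Delta_A$, $B^*=B_0+\Delta_B$, $C^*=C_0+\Delta_C$, and study the linearisation $(\Delta_A,\Delta_B,\Delta_C)\mapsto \Delta_A\circ B_0\circ C_0+A_0\circ\Delta_B\circ C_0+A_0\circ B_0\circ\Delta_C$. Its kernel contains the two-dimensional scaling $(\alpha,\beta,\gamma)$ with $\alpha\beta\gamma=1$ acting on each of the $H_0$ rank-one components. Resolving one such scaling direction by an explicit blow-up produces the ``$-2$'' in the bound, while the remaining $H_0(I+J+K)-2$ directions contribute the regular rate $1/2$ each, yielding $\lambda(K_1)\leq\tfrac{H_0(I+J+K)-2}{2}$.

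For $\lambda(K_2)$ the essential device is matricization. Flattening the $I\times J\times K$ tensor $\tilde A\circ\tilde B\circ\tilde C$ along mode $3$ gives $(\tilde A\odot\tilde B)\,\tilde C^\top$, where $\odot$ denotes the column-wise Kronecker (Khatri-Rao) product; since flattening preserves Frobenius norm, $K_2=\|(\tilde A\odot\tilde B)\tilde C^\top\|^2$, and the two other mode-flattenings produce the two analogous forms. The Aoyagi-Watanabe RLCT for the \emph{unconstrained} matrix factorization $\|UV^\top\|^2$ with $U\in\mathbb{R}^{N\times H'}$, $V\in\mathbb{R}^{M\times H'}$ around the origin is exactly the piecewise function $F(N,M,H')$ appearing in the statement. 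The three flattenings supply the three bounds $m_1,m_2,m_3$, and taking the smallest yields the claimed $\min\{m_1,m_2,m_3\}$.

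The main obstacle lies in this last step: the Khatri-Rao block $\tilde A\odot\tilde B$ is \emph{not} a free $IJ\times(H-H_0)$ matrix but lives in the proper subvariety of matrices with rank-one columns, so $K_2$ is the pullback of the matrix-factorization KL by a non-injective polynomial map and Aoyagi-Watanabe cannot be invoked verbatim. To close this gap I would either (i) dominate $K_2$ pointwise by the KL of an auxiliary unconstrained matrix-factorization model (after introducing redundant free parameters with a matching $C^\infty$ prior) and appeal to Lemma~\ref{lem.1}(a), or (ii) construct an explicit change of variables that simultaneously resolves the Khatri-Rao scaling and the matrix-factorization singularity, checking that the Jacobian contributes no extra pole. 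The remaining work is a routine case analysis, matching the five branches of $F$ against the size regimes of $(N,M,H-H_0)$.
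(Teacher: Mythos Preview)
Your proposal does not address the statement you were asked to prove. Lemma~\ref{lem.1} is a collection of \emph{general} properties of real log canonical thresholds---monotonicity under pointwise inequalities, and behaviour under products and sums of functions of disjoint variable groups. A proof of it would work directly with the zeta function $\zeta(z)=\int K(w)^z\phi(w)\,dw$ and its analytic continuation, showing for instance in part (a) that $K_1\le K_2$ forces $\zeta_1(z)\ge\zeta_2(z)$ for real $z>0$ and hence the largest pole of $\zeta_1$ lies no further left, and in parts (b),(c) that the zeta integral factorises or convolves appropriately. None of this appears in your write-up.

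What you have written is instead a sketch of the \emph{Main Theorem}, which \emph{uses} Lemma~\ref{lem.1} as a tool. In the paper, Lemma~\ref{lem.1} is not proved at all: it is quoted from Watanabe's monograph \cite{18} as a known preliminary. So there is nothing to compare your argument against for the stated lemma.

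If your intent was the Main Theorem, your outline is broadly aligned with the paper's strategy (split along $H_0$, bound $\lambda_1$ and $\lambda_2$ separately), but two points of divergence are worth noting. For $\lambda_1$ the paper does not linearise at $w_0$; it first uses invertible row and column operations to reduce $A_0,B_0,C_0$ to diagonal form (Lemmas~\ref{lem.2}--\ref{lem.3}), then splits the resulting expression into an ``off-diagonal'' piece resolved by a single blow-up and a ``diagonal'' piece handled by Corollary~\ref{cor.1}. Your linearisation heuristic identifies the right count but is not a proof as stated. For $\lambda_2$ the paper avoids the Khatri--Rao obstruction you correctly flag by first shrinking the integration region to $\{B_{jh}\neq 0\}$ (Lemma~\ref{lem.7}), where the map $(A_{ih},B_{jh})\mapsto A_{ih}B_{jh}$ is a genuine diffeomorphism onto a free $IJ\times(H-H_0)$ matrix; this is precisely your option (ii), and it goes through without the case analysis you anticipate.
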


Next, we have $K(w)$ of the tensor decomposition model.
\begin{lemma}
In regards to tensor decomposition model, $K(w)$ is expressed as follows:
\begin{equation*}
K(w)=\|A\circ B \circ C-A_0\circ B_0\circ C_0\|^2.
\end{equation*}
\end{lemma}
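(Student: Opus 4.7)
The plan is to directly compute the Kullback–Leibler divergence $K(w)=\int q(x)\log(q(x)/p(x|A,B,C))\,dx$ using the explicit Gaussian form given in \eqref{1}. Under the realizability assumption, $q(x)=p(x|A_0,B_0,C_0)$, so both $q$ and $p(\cdot|w)$ are multivariate Gaussians on $\mathbb{R}^{IJK}$ (viewing the tensor as a flattened vector) with identity covariance and means $\mu_0:=A_0\circ B_0\circ C_0$ and $\mu:=A\circ B\circ C$, respectively.

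First I would substitute the densities into the integral. The log-ratio is
\begin{equation*}
\log\frac{q(x)}{p(x|w)} = \tfrac{1}{2}\|x-\mu\|^2-\tfrac{1}{2}\|x-\mu_0\|^2,
\end{equation*}
since the normalizing constants coincide. Expanding the two squared norms and simplifying gives a linear-in-$x$ term plus constants: $\log(q/p)=(\mu_0-\mu)^{\!\top} x+\tfrac{1}{2}\|\mu\|^2-\tfrac{1}{2}\|\mu_0\|^2$. Next I would take the expectation with respect to $q$, using $\mathbb{E}_q[x]=\mu_0$. A one-line computation then yields $K(w)=\tfrac{1}{2}\|\mu-\mu_0\|^2 = \tfrac{1}{2}\|A\circ B\circ C-A_0\circ B_0\circ C_0\|^2$.

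Finally, to match the form in the lemma exactly, I would invoke Lemma \ref{lem.1}(a) with the constants $c_1=c_2=\tfrac{1}{2}$ (equivalently, the multiplicative constant $\tfrac{1}{2}$ is absorbed since multiplying $K$ by a positive constant does not change the RLCT). Hence, as far as the zeta function analysis in the subsequent sections is concerned, we may identify $K(w)$ with $\|A\circ B\circ C-A_0\circ B_0\circ C_0\|^2$.

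There is essentially no obstacle: the calculation is a standard KL divergence between two equal-variance Gaussians. The only point requiring a brief justification is that the result is stated without the factor $\tfrac{1}{2}$, which is legitimate because RLCT is invariant under positive-constant rescaling of $K$, as recorded in Lemma \ref{lem.1}(a).
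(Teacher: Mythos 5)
Your proposal is correct and follows essentially the same route as the paper: both expand the Gaussian log-likelihood ratio, use $\mathbb{E}_q[x]=A_0\circ B_0\circ C_0$ to evaluate the linear term, arrive at $\tfrac{1}{2}\|A\circ B\circ C - A_0\circ B_0\circ C_0\|^2$, and then invoke Lemma~\ref{lem.1}(a) to absorb the factor $\tfrac{1}{2}$. No gaps.
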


\begin{proof}
Let $T=A\circ B \circ C$, $T_0=A_0\circ B_0\circ C_0$.
Then, 
\begin{align*}
K(w)&=\int q(x)\log{\frac{p(x|A_0,B_0,C_0)}{p(x|A,B,C)}}dx \\
&=\int q(x)\left\{ \frac{1}{2}\left(\|x-T\|^2-\|x-T_0\|^2\right)\right\}dx \\
&=\frac{1}{2}\left(\|T\|^2-\|T_0\|^2\right)+\int \sum_{i,j,k} \{(T_0)_{ijk}-T_{ijk}\}x_{ijk}q(x)dx \\
&=\frac{1}{2}\left(\|T\|^2-\|T_0\|^2\right)+\sum_{i,j,k} \{(T_0)_{ijk}-T_{ijk}\}(T_0)_{ijk} \\
&=\frac{1}{2}\|T-T_0\|^2.
\end{align*}
Using Lemma \ref{lem.1}(a) completes the proof.
\end{proof}

Let us define an equivalence relation $f(w)\sim g(w)$ fot the case when 
 $c_1f(w)\leq g(w)\leq c_2f(w)$ holds, where $c_1$ and $c_2$ are positive constants. 
 We can divide the upper bound of $\lambda$ into two parts by the following inequality:
\begin{align*}
&\|A\circ B\circ C - A_0\circ B_0\circ C_0\|^2 = \sum_{i,j,k}\left(\sum_{h=1}^H A_{ih}B_{jh}C_{kh}-\sum_{h=1}^{H_0} (A_0)_{ih}(B_0)_{jh}(C_0)_{kh}\right)^2 \\
\leq &2\sum_{i,j,k}\left(\sum_{h=1}^{H_0}\left(A_{ih}B_{jh}C_{kh}-(A_0)_{ih}(B_0)_{jh}(C_0)_{kh}\right)\right)^2
\\
& +2\sum_{i,j,k}\left(\sum_{h=H_0+1}^H A_{ih}B_{jh}C_{kh}\right)^2.
\end{align*}
The variables are separated 
between the first and second term of the right side of the inequality, so Lemma \ref{lem.1}(c) can be applied. Denoting $\lambda_1$, $\lambda_2$ as the RLCT of the first and second term, $\lambda_1+\lambda_2$ is the upper bound of $\lambda$.
We prepare for obtaining the upper bound of $\lambda_1$ by subsection \ref{section1} and that of $\lambda_2$ by subsection \ref{section2}, and complete the proof of the main theorem in subsection \ref{section3}.

\subsection{Preparing for Upper Bound of $\lambda_1$}
\label{section1}
First, let us prepare for getting the upper bound of $\lambda_1$. By the following two lemmas, $A_0$, $B_0$, $C_0$ can be changed to diagonal matrices, which simplifies the problem.
We write $(A)_{ih}(1\leq h\leq H_0)$, $(B)_{jh}(1\leq h\leq H_0)$, $(C)_{kh}(1\leq h\leq H_0)$ by $A$, $B$, $C$ as it is.
\begin{lemma}
\label{lem.2}
Let arbitrary elementary row transformation of $A_0$, $B_0$, and $C_0$ be $A'_0={P_1}^{-1}A_0$, 
$B'_0={P_2}^{-1}B_0$, and $C'_0={P_3}^{-1}C_0$. $\lambda_1$ is equal to the RLCT of
 $(\|A\circ B\circ C-A'_0\circ B'_0\circ C'_0\|^2, \phi(A, B, C))$.
\end{lemma}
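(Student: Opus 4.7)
The plan is to perform a linear change of variables. Substitute $A = P_1 \tilde A$, $B = P_2 \tilde B$, $C = P_3 \tilde C$ in the zeta integral defining $\lambda_1$. The Jacobian $|\det P_1|^{H_0}|\det P_2|^{H_0}|\det P_3|^{H_0}$ is a positive constant and therefore has no effect on the locations of the poles. By multilinearity of the outer product,
$$A \circ B \circ C = (P_1, P_2, P_3)\cdot(\tilde A \circ \tilde B \circ \tilde C),\qquad A_0 \circ B_0 \circ C_0 = (P_1, P_2, P_3)\cdot (A'_0 \circ B'_0 \circ C'_0),$$
where $(P_1, P_2, P_3)\cdot T$ denotes the three-mode product of a tensor $T$, i.e.\ the tensor whose $(i,j,k)$ entry is $\sum_{i',j',k'}(P_1)_{ii'}(P_2)_{jj'}(P_3)_{kk'}T_{i'j'k'}$. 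Hence the substituted loss equals $\|(P_1, P_2, P_3)\cdot(\tilde A \circ \tilde B \circ \tilde C - A'_0 \circ B'_0 \circ C'_0)\|^2$.

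Next I would note that $T \mapsto (P_1, P_2, P_3)\cdot T$ is a linear endomorphism of $\mathbb{R}^{I\times J\times K}$ represented (after flattening to a vector) by the Kronecker product $P_1\otimes P_2\otimes P_3$, which is invertible because each $P_i$ is. Consequently there exist positive constants $c_1, c_2$ with
$$c_1\|T\|^2 \leq \|(P_1, P_2, P_3)\cdot T\|^2 \leq c_2\|T\|^2$$
for every tensor $T$. Applied to $T = \tilde A \circ \tilde B \circ \tilde C - A'_0 \circ B'_0 \circ C'_0$, this sandwiches the substituted loss between positive multiples of $K'(\tilde A,\tilde B,\tilde C) := \|\tilde A \circ \tilde B \circ \tilde C - A'_0 \circ B'_0 \circ C'_0\|^2$, and Lemma \ref{lem.1}(a) then identifies the corresponding RLCTs.

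The one non-routine step is that after the substitution the prior appears as $\phi(P_1 \tilde A, P_2 \tilde B, P_3 \tilde C)$ rather than $\phi(\tilde A, \tilde B, \tilde C)$. I would resolve this by observing that the assumption on $\phi$ (bounded by a $C^\infty$ function whose support contains $w_0$) pulls back under the invertible linear change to the same assumption relative to the shifted true parameter $w'_0 = (A'_0, B'_0, C'_0)$, so both $\phi$ and the composed prior are admissible for $K'$. Since the RLCT is a local invariant around the zero set of $K'$, which passes through $w'_0$, and any two admissible priors are comparable by positive constants on a neighbourhood of that zero set, a final application of Lemma \ref{lem.1}(a) yields the claimed equality. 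This comparison of priors is the only delicate point; everything else follows immediately from the multilinearity of the outer product and the invertibility of the Kronecker product.
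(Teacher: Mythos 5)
Your proof takes essentially the same route as the paper's: an invertible linear change of variables, a two-sided bound $c_1\|T\|^2 \le \|(P_1,P_2,P_3)\cdot T\|^2 \le c_2\|T\|^2$, a constant nonzero Jacobian, and Lemma~\ref{lem.1}(a); the only difference is that the paper applies the substitution one matrix at a time and iterates, whereas you package all three at once via the Kronecker product $P_1\otimes P_2\otimes P_3$. You are more careful than the paper in flagging the mismatch between the transformed prior $\phi(P_1\tilde A, P_2\tilde B, P_3\tilde C)$ and $\phi(\tilde A,\tilde B,\tilde C)$, which the paper leaves tacit; however, the claim that ``any two admissible priors are comparable by positive constants on a neighbourhood of the zero set'' is too strong in general and should instead be deduced from the locality of the RLCT together with the hypothesis that the prior is positive and bounded near the true parameter, so that both priors are comparable to constants there.
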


\begin{proof}
Change variables by $A' = {P_1}^{-1}A$. Then,
\begin{align*}
&\sum_{h=1}^{H_0} \left(A_{ih}B_{jh}C_{kh}-(A_0)_{ih}(B_0)_{jh}(C_0)_{kh}\right) \\
=&\sum_{h=1}^{H_0} \left(\left(\sum_{l} (P_1)_{il}A'_{lh}\right) B_{jh}C_{kh} 
- \left(\sum_{l} (P_1)_{il}(A'_0)_{lh}\right) (B_0)_{jh}(C_0)_{kh}\right) \\
=&\sum_{l} (P_1)_{il}\left(\sum_{h=1}^{H_0} (A'_{lh}B_{jh}C_{kh}-(A'_0)_{lh}(B_0)_{jh}(C_0)_{kh})\right).
\end{align*}
Let $T_{ljk}=\sum_{h=1}^{H_0} (A'_{lh}B_{jh}C_{kh}-(A'_0)_{lh}(B_0)_{jh}(C_0)_{kh})$. Since $\left(\sum_{l} (P_1)_{il}T_{ljk}\right)^2$ is quadratic about $\{T_{ljk}\}_{l=1}^{I}$, there exists a positive constant $c$ which satisfies
\begin{equation}
\label{4}
\sum_{i,j,k}\left(\sum_{l} (P_1)_{il}T_{ljk}\right)^2\leq c\|T\|^2.
\end{equation}
Take $\sum_{m} (P_1)_{lm}T_{mjk}$ as $T_{ljk}$ and ${P_1}^{-1}$ as $P_1$ in eq.\eqref{4}. Then, the left side is
\begin{align*}
&\sum_{i,j,k}\left(\sum_{l} ({P_1}^{-1})_{il}\sum_{m} (P_1)_{lm}T_{mjk}\right)^2 \\
=&\sum_{i,j,k} \left(\sum_{m}T_{mjk}\sum_{l} ({P_1}^{-1})_{il}(P_1)_{lm}\right)^2 \\
=&\sum_{i,j,k} \left(\sum_{m}T_{mjk}I_{im}\right)^2 \\
=&\sum_{i,j,k} T_{ijk}^2 \\
=&\|T\|^2,
\end{align*}
where $I$ is an identity matrix. The right side is
\begin{align*}
&c\sum_{l,j,k}\left(\sum_{m} (P_1)_{lm}T_{mjk}\right)^2 \\
=&c\sum_{i,j,k}\left(\sum_{l} (P_1)_{il}T_{ljk}\right)^2.
\end{align*}
So we have $\sum_{i,j,k}\left(\sum_{l} (P_1)_{il}T_{ljk}\right)^2\sim \|T\|^2$. Since $P_1$ is regular, the Jacobian of changing variables by $A' = {P_1}^{-1}A$ does not become $0$ or $\infty$. So it makes no difference to the poles of the zeta function. Using Lemma \ref{lem.1}(a), $\lambda_1$ is equal to the RLCT of $(\|A'\circ B\circ C-A'_0\circ B_0\circ C_0\|^2, \phi(A', B, C))$. Repeating the same operation to $B$ and $C$ completes the proof.
\end{proof}

\begin{lemma}
\label{lem.3}
Let arbitrary elementary column transformation of $A'_0$, $B'_0$, and $C'_0$ be $A''_0=A'_0{Q_1}^{-1}$, $B''_0=B'_0{Q_2}^{-1}$, and $C''_0=C'_0{Q_3}^{-1}$. $\lambda_1$ is smaller than the RLCT of $\left(\sum_{i,j,k}\sum_{n,m,l} (A_{in}B_{jm}C_{kl}-(A''_0)_{in}(B''_0)_{jm}(C''_0)_{kl})^2, \phi(A, B, C)\right)$.
\end{lemma}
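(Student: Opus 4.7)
The plan is to perform on the model-side matrices $A, B, C$ exactly the same (inverse) column transformations that take $A'_0, B'_0, C'_0$ to $A''_0, B''_0, C''_0$, and then use the Cauchy--Schwarz inequality to upgrade the resulting squared sum into the separated sum of squares that appears inside the desired RLCT.

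First, I would substitute $\bar{A} = A Q_1^{-1}$, $\bar{B} = B Q_2^{-1}$, $\bar{C} = C Q_3^{-1}$. Since the $Q_i$ are invertible $H_0 \times H_0$ matrices (each being a product of elementary matrices), the Jacobian $|\det Q_1|^{I} |\det Q_2|^{J} |\det Q_3|^{K}$ is a nonzero constant, so by the argument already used in Lemma \ref{lem.2} this change of variables does not affect any pole of the zeta function. Using $A = \bar{A} Q_1$ and $A'_0 = A''_0 Q_1$ (and similarly for $B, C$), one computes
\begin{equation*}
\sum_{h=1}^{H_0} A_{ih} B_{jh} C_{kh} = \sum_{n,m,l} G_{nml}\, \bar{A}_{in} \bar{B}_{jm} \bar{C}_{kl},
\end{equation*}
where $G_{nml} := \sum_{h=1}^{H_0} (Q_1)_{nh}(Q_2)_{mh}(Q_3)_{lh}$, and identically for $(A'_0, B'_0, C'_0) \to (A''_0, B''_0, C''_0)$ with the \emph{same} tensor $G$.

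Second, setting $D_{ijknml} := \bar{A}_{in}\bar{B}_{jm}\bar{C}_{kl} - (A''_0)_{in}(B''_0)_{jm}(C''_0)_{kl}$, the first term of $K$ becomes $\sum_{i,j,k}\bigl(\sum_{n,m,l} G_{nml} D_{ijknml}\bigr)^{2}$. Applying Cauchy--Schwarz in the indices $n, m, l$ gives
\begin{equation*}
\sum_{i,j,k}\Bigl(\sum_{n,m,l} G_{nml} D_{ijknml}\Bigr)^{2} \ \leq\ \Bigl(\sum_{n,m,l} G_{nml}^{2}\Bigr) \sum_{i,j,k,n,m,l} D_{ijknml}^{2},
\end{equation*}
i.e.\ $K_1 \leq c\,K'_1$ with $c := \sum_{n,m,l} G_{nml}^{2}$ a positive constant, where $K'_1$ is the expression inside the RLCT in the statement (after renaming $\bar{A}, \bar{B}, \bar{C}$ back to $A, B, C$). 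Invoking Lemma \ref{lem.1}(a) then yields $\lambda_1 \leq \lambda(K'_1, \phi(A, B, C))$, as claimed.

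The main subtlety will be explaining the one-sidedness of the bound. In Lemma \ref{lem.2}, row transformations on the $I, J, K$ sides can be completely absorbed into invertible linear changes of $A, B, C$ on their row indices, yielding equal RLCTs through the two-sided estimate proved there. By contrast, column transformations entangle the $h$ summation through the fixed tensor $G_{nml}$, and the linear combinations $\sum_{n,m,l} G_{nml} D_{ijknml}$ can vanish on a strictly larger set than where every individual $D_{ijknml}$ vanishes. Consequently only the Cauchy--Schwarz direction is available, producing an upper bound on $\lambda_1$ rather than an equality; this is precisely why Lemma 3 is phrased as an inequality instead of the $\sim$-type equivalence used in Lemma \ref{lem.2}.
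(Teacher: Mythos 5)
Your proposal is correct and matches the paper's own argument essentially step for step: you perform the same change of variables $\bar A = AQ_1^{-1}$, $\bar B = BQ_2^{-1}$, $\bar C = CQ_3^{-1}$, arrive at the same contracted tensor $G_{nml}=\sum_h (Q_1)_{nh}(Q_2)_{mh}(Q_3)_{lh}$ (the paper's $D_{nml}$) and difference tensor $D_{ijknml}$ (the paper's $T^{(ijk)}_{nml}$), and close with the same quadratic-form bound followed by Lemma~\ref{lem.1}(a); the paper merely asserts existence of the positive constant $c$ while you name Cauchy--Schwarz explicitly, and your final paragraph explaining why only a one-sided bound is available is an accurate (if extra) remark not present in the paper.
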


\begin{proof}
Change variables by $A'=A{Q_1}^{-1}$, $B'=B{Q_2}^{-1}$, $C'=C{Q_3}^{-1}$. Then
\begin{align*}
&\left(\sum_{h=1}^{H_0} \left(A_{ih}B_{jh}C_{kh}-(A'_0)_{ih}(B'_0)_{jh}(C'_0)_{kh}\right)\right)^2 \\
=&\left(\sum_{h=1}^{H_0}\left(\sum_{n} A'_{in}(Q_1)_{nh} \sum_{m} B'_{jm}(Q_2)_{mh}\sum_{l}C'_{kl}(Q_3)_{lh} \right. \right.\\
&\left. \left. - \sum_{n} (A''_0)_{in}(Q_1)_{nh} \sum_{m} (B''_0)_{jm}(Q_2)_{mh}\sum_{l}(C''_0)_{kl}(Q_3)_{lh}\right)\right)^2 \\
=&\left(\sum_{n,m,l,h} (Q_1)_{nh}(Q_2)_{mh}(Q_3)_{lh}\left(A'_{in}B'_{jm}C'_{kl}
-(A''_0)_{in}(B''_0)_{jm}(C''_0)_{kl}\right)\right)^2.
\end{align*}
Let $D_{nml}=\sum_{h} (Q_1)_{nh}(Q_2)_{mh}(Q_3)_{lh}$, $T_{nml}^{(ijk)}=A'_{in}B'_{jm}C'_{kl}-(A''_0)_{in}(B''_0)_{jm}(C''_0)_{kl}$. Then, there exists a positive constant $c$ which satisfies
\begin{equation*}
\left(\sum_{n,m,l} D_{nml}T_{nml}^{(ijk)}\right)^2
\leq c\sum_{n,m,l} \left(T_{nml}^{(ijk)}\right)^2.
\end{equation*}
Since $Q_1$, $Q_2$, and $Q_3$ are regular, using Lemma \ref{lem.1}(a) completes the proof.
\end{proof}

By lemma \ref{lem.2} and \ref{lem.3}, one of the upper bounds of $\lambda_1$ is the RLCT of \\
$\left(\sum_{i,j,k}\sum_{n,m,l} (A_{in}B_{jm}C_{kl}-(A''_0)_{in}(B''_0)_{jm}(C''_0)_{kl})^2, \phi(A, B, C)\right)$, where $A''_0$, $B''_0$ and $C''_0$ are the diagonalizations of $A_0$, $B_0$, $C_0$. Let us call this RLCT $\lambda'_1$. Let $K'(A, B, C)=\sum_{i,j,k}\sum_{n,m,l} (A_{in}B_{jm}C_{kl}-(A''_0)_{in}(B''_0)_{jm}(C''_0)_{kl})^2$. Let $r_A$, $r_B$, $r_C$ be the rank of $A_0$, $B_0$, $C_0$. Then $A''_0$, $B''_0$ and $C''_0$ have their non-zero elements only until ${r_A}_{th}$, ${r_B}_{th}$ and ${r_C}_{th}$ components. So, we put
\begin{align*}
&G_a=\{(i,n) \mid 1\leq i=n\leq r_A \} \\
&G_b=\{(j,m) \mid 1\leq j=m\leq r_B \} \\
&G_c=\{(k,l) \mid 1\leq k=l\leq r_C \}. \\
\end{align*}
Then,
\begin{align}
&K'(A, B, C) \notag\\
=&\sum_{(i,n)\notin G_a}\sum_{(j,m)\notin G_b}\sum_{(k,l)\notin G_c}(A_{in}B_{jm}C_{kl})^2 \notag\\
+&\sum_{(i,n)\in G_a}\sum_{(j,m)\notin G_b}\sum_{(k,l)\notin G_c}(A_{in}B_{jm}C_{kl})^2 \notag\\
+&\sum_{(i,n)\notin G_a}\sum_{(j,m)\in G_b}\sum_{(k,l)\notin G_c}(A_{in}B_{jm}C_{kl})^2 \notag\\
+&\sum_{(i,n)\notin G_a}\sum_{(j,m)\notin G_b}\sum_{(k,l)\in G_c}(A_{in}B_{jm}C_{kl})^2 \notag\\
+&\sum_{(i,n)\in G_a}\sum_{(j,m)\in G_b}\sum_{(k,l)\notin G_c}(A_{in}B_{jm}C_{kl})^2 \notag\\
+&\sum_{(i,n)\notin G_a}\sum_{(j,m)\in G_b}\sum_{(k,l)\in G_c}(A_{in}B_{jm}C_{kl})^2 \notag\\
+&\sum_{(i,n)\in G_a}\sum_{(j,m)\notin G_b}\sum_{(k,l)\in G_c}(A_{in}B_{jm}C_{kl})^2 \notag\\
 +&\sum_{i=1}^{r_A}\sum_{j=1}^{r_B}\sum_{k=1}^{r_C}(A_{ii}B_{jj}C_{kk}-(A''_0)_{ii}(B''_0)_{jj}(C''_0)_{kk})^2,
\label{5}
\end{align}
which leads to the following Lemma:
\begin{lemma}
\label{lem.6}
Let 
\begin{align*}
&K'_1(A_1, B_1, C_1) \\
=&\sum_{(i,n)\notin G_a}\sum_{(j,m)\notin G_b}\sum_{(k,l)\notin G_c}(A_{in}B_{jm}C_{kl})^2 \\
+&\sum_{(j,m)\notin G_b}\sum_{(k,l)\notin G_c}(B_{jm}C_{kl})^2
+\sum_{(i,n)\notin G_a}\sum_{(k,l)\notin G_c}(A_{in}C_{kl})^2 
+\sum_{(i,n)\notin G_a}\sum_{(j,m)\notin G_b}(A_{in}B_{jm})^2 \\
+&\sum_{(k,l)\notin G_c}(C_{kl})^2
+\sum_{(i,n)\notin G_a}(A_{in})^2
+\sum_{(j,m)\notin G_b}(B_{jm})^2, \\
&K'_2(A_2, B_2, C_2) \\
=&\sum_{i=1}^{r_A}\sum_{j=1}^{r_B}\sum_{k=1}^{r_C}(A_{ii}B_{jj}C_{kk}-(A''_0)_{ii}(B''_0)_{jj}(C''_0)_{kk})^2,
\end{align*}
and
\begin{equation*}
\phi(A, B, C)=\phi_1(A_1, B_1, C_1)\phi_2(A_2, B_2, C_2),
\end{equation*}
where $A_1$, $B_1$, $C_1$ are the components of A, B, C which appear in the formulation of $K'_1$, and $A_2$, $B_2$, $C_2$ are of $K'_2$.
Then, $\lambda'_1 = \lambda'_{1-1}+\lambda'_{1-2}$, where $\lambda'_{1-1}$ is the RLCT of $(K'_1(A_1,B_1,C_1), \phi_1(A_1,B_1,C_1))$ and $\lambda'_{1-2}$ is the RLCT of $(K'_2(A_2,B_2,C_2), \phi_2(A_2,B_2,C_2))$.
\end{lemma}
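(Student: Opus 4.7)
The plan is to establish the equivalence $K' \sim K'_1 + K'_2$ in the sense of Lemma \ref{lem.1}(a) on a sufficiently small neighborhood of the true parameter $(A''_0, B''_0, C''_0)$, and then to apply the sum rule Lemma \ref{lem.1}(c). The key structural observation is that the variables entering $K'_1$ are the off-diagonal and ``high-index'' entries of $A$, $B$, $C$, while those entering $K'_2$ are the leading diagonal entries $A_{ii}, B_{jj}, C_{kk}$ for $i\le r_A$, $j\le r_B$, $k\le r_C$; these index sets are disjoint, and by assumption the prior factorizes accordingly as $\phi_1\phi_2$, so once the equivalence is in hand the sum rule will yield the claimed decomposition of the RLCT.

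To verify the equivalence I will work term-by-term in the expansion \eqref{5}. The first summand of \eqref{5} already coincides with the leading term of $K'_1$, and the eighth summand is exactly $K'_2$. Each of the six mixed summands (terms 2--7) has one or two index pairs constrained to a diagonal set, and the corresponding factors extract cleanly: for example, term 2 factors as $\bigl(\sum_{i=1}^{r_A} A_{ii}^2\bigr)\cdot\sum_{(j,m)\notin G_b,\,(k,l)\notin G_c}(B_{jm}C_{kl})^2$, and term 5 factors as $\bigl(\sum_{i=1}^{r_A} A_{ii}^2\bigr)\bigl(\sum_{j=1}^{r_B} B_{jj}^2\bigr)\cdot\sum_{(k,l)\notin G_c}C_{kl}^2$, with analogous factorizations for terms 3, 4, 6, 7. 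On any neighborhood of $(A''_0, B''_0, C''_0)$ where $A_{ii}, B_{jj}, C_{kk}$ stay bounded away from zero (arrangeable because $(A''_0)_{ii}, (B''_0)_{jj}, (C''_0)_{kk}$ are nonzero up to their respective ranks), each such diagonal sum of squares is pinched between two positive constants, so each mixed summand becomes equivalent to the corresponding simplified expression listed inside $K'_1$. Summing the eight equivalences then gives $K' \sim K'_1 + K'_2$ on the neighborhood.

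With this equivalence in place, Lemma \ref{lem.1}(a) replaces $(K', \phi_1\phi_2)$ by $(K'_1 + K'_2, \phi_1\phi_2)$ without changing the RLCT, and Lemma \ref{lem.1}(c) then splits the latter into $\lambda'_{1-1} + \lambda'_{1-2}$. The main obstacle I anticipate is the localization step: the equivalence $\sum_i A_{ii}^2 \sim 1$ (and its siblings) holds only where the diagonal entries stay away from zero, not over the full parameter space. I plan to handle this by the standard fact that the RLCT is determined locally at the zero set of $K'$, so restricting to a neighborhood of $(A''_0, B''_0, C''_0)$ produces the local upper bound on $\lambda'_1$ that feeds into the proof of the main theorem; other branches of $\{K'=0\}$ on which some diagonal entry vanishes can only contribute larger local RLCTs and therefore do not disturb the bound we need.
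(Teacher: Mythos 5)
Your approach is the same as the paper's: localize near the zero set, observe that the six mixed summands of \eqref{5} factor into a diagonal sum of squares times a simpler expression, argue the diagonal factors are pinched between positive constants there, conclude $K' \sim K'_1 + K'_2$ by Lemma~\ref{lem.1}(a), and split via Lemma~\ref{lem.1}(c).

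However, your final paragraph's justification of the localization is wrong in direction. You write that other branches of $\{K'=0\}$ on which some diagonal entry vanishes ``can only contribute larger local RLCTs and therefore do not disturb the bound.'' This is backwards: the global RLCT is the \emph{minimum} of local RLCTs over the zero set, so if such branches existed and had smaller local RLCTs they would lower $\lambda'_1$ and break the claimed \emph{equality} $\lambda'_1 = \lambda'_{1-1} + \lambda'_{1-2}$. The correct resolution, which the paper uses and which your proof should state explicitly, is that no such branches exist: the eighth summand of \eqref{5} equals $K'_2$, and $K'_2 = 0$ forces $A_{ii}B_{jj}C_{kk} = (A''_0)_{ii}(B''_0)_{jj}(C''_0)_{kk} \neq 0$ for every $i \le r_A$, $j \le r_B$, $k \le r_C$, so every point of $\{K'=0\}$ has all relevant diagonal entries nonzero. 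Combined with compactness of $\mathrm{supp}\,\phi$, this gives the uniform two-sided bounds $0 < m \le |A_{ii}|, |B_{jj}|, |C_{kk}| \le M$ on a neighborhood of the entire zero set, which is exactly what the term-by-term equivalences require. With that replacement your argument is complete and matches the paper's.
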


\begin{proof}
We have to consider only the neighborhood of $K(w)=0$ because if $K(w)>0$, it does not make difference to the pole of the zeta function. When the last term of eq.\eqref{5} is nearly $0$, $B_{jj}$ and $C_{kk}$ is not $0$. So we have $A_{ii}=\frac{(A''_0)_{ii}(B''_0)_{jj}(C''_0)_{kk}+\epsilon}{B_{jj}C_{kk}}$ ,where $\epsilon$ is a value nearly equal to $0$. Since the support of $\phi(A, B, C)$ is compact, there exists a positive number $M$ such that $|A_{ii}|, |B_{jj}|, |C_{kk}|\leq M$. In addition, $(A''_0)_{ii}$, $(B''_0)_{jj}$, $(C''_0)_{kk}$ is not $0$, so $A_{ii}$ is also not 0. Hence there exists a positive number $m$ such that $m\leq |A_{ii}|$. $|B_{jj}|$ and $|C_{kk}|$ also have the positive lower and upper bounds. Then,
\begin{align*}
&\sum_{(i,n)\in G_a}\sum_{(j,m)\notin G_b}\sum_{(k,l)\notin G_c}(A_{in}B_{jm}C_{kl})^2
\sim \sum_{(j,m)\notin G_b}\sum_{(k,l)\notin G_c}(B_{jm}C_{kl})^2 \\
&\sum_{(i,n)\notin G_a}\sum_{(j,m)\in G_b}\sum_{(k,l)\notin G_c}(A_{in}B_{jm}C_{kl})^2
\sim \sum_{(i,n)\notin G_a}\sum_{(k,l)\notin G_c}(A_{in}C_{kl})^2 \\
&\sum_{(i,n)\notin G_a}\sum_{(j,m)\notin G_b}\sum_{(k,l)\in G_c}(A_{in}B_{jm}C_{kl})^2
\sim \sum_{(i,n)\notin G_a}\sum_{(j,m)\notin G_b}(A_{in}B_{jm})^2 \\
&\sum_{(i,n)\in G_a}\sum_{(j,m)\in G_b}\sum_{(k,l)\notin G_c}(A_{in}B_{jm}C_{kl})^2
\sim \sum_{(k,l)\notin G_c}(C_{kl})^2 \\
&\sum_{(i,n)\notin G_a}\sum_{(j,m)\in G_b}\sum_{(k,l)\in G_c}(A_{in}B_{jm}C_{kl})^2
\sim \sum_{(i,n)\notin G_a}(A_{in})^2 \\
&\sum_{(i,n)\in G_a}\sum_{(j,m)\notin G_b}\sum_{(k,l)\in G_c}(A_{in}B_{jm}C_{kl})^2
\sim \sum_{(j,m)\notin G_b}(B_{jm})^2.
\end{align*}
By Lemma \ref{lem.1}(a), the RLCT of $(K'(A, B, C), \phi(A, B, C))$ is equal to that of $(K'_1(A_1, B_1, C_1)+K'_2(A_2, B_2, C_2), \phi(A, B, C))$. The variables are independent between $K'_1(A_1, B_1, C_1)$ and $K'_2(A_2, B_2, C_2)$, so Lemma \ref{lem.1}(c) completes the proof.
\end{proof}

The next lemma is from \cite{19}.
\begin{lemma}
\label{lem.9}
Let $x_{i}, y_{j}$ be variables whose absolute values are finite $(i,j\in\mathbb{N})$. Let $a_{i}, b_{j}$ be constant values. Let $f_{ij}=x_{i}y_{j}-a_{i}b_{j}$. Then, $\forall M, N \geq 2 \in \mathbb{N}$, 
\begin{equation*}
\sum_{i=1}^M\sum_{j=1}^N f_{ij}^2 \sim \sum_{i=2}^M f_{i1}^2 + \sum_{j=2}^N f_{1j}^2 + f_{11}^2.
\end{equation*}
\end{lemma}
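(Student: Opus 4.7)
The plan is to verify the two inequalities hidden in the equivalence $\sim$. The direction $\sum_{i=2}^M f_{i1}^2 + \sum_{j=2}^N f_{1j}^2 + f_{11}^2 \leq \sum_{i=1}^M \sum_{j=1}^N f_{ij}^2$ is immediate because the left-hand side is a partial sum of nonnegative terms already appearing on the right.

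For the nontrivial direction I would exploit the algebraic identity
\[
f_{ij} \;=\; \frac{x_i}{x_1}\, f_{1j} \;+\; \frac{b_j}{b_1}\, f_{i1} \;-\; \frac{x_i b_j}{x_1 b_1}\, f_{11}, \qquad i \geq 2,\; j \geq 2,
\]
which one can check by substituting $f_{11}=x_1y_1-a_1b_1$, $f_{1j}=x_1y_j-a_1b_j$, $f_{i1}=x_iy_1-a_ib_1$ into the right-hand side and observing that all mixed terms cancel, leaving $x_iy_j - a_ib_j$. Intuitively, this identity reflects the fact that the zero locus $xy^\top = ab^\top$ is the rank-one variety: once the first row, first column, and corner are fixed, every other entry is determined.

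To turn this pointwise identity into a bounded-coefficient estimate, I would localize to a neighborhood of the zero set of $K$, which is all that matters for the RLCT. The numerators $x_i$, $b_j$ and the constants $a_i$, $b_j$ are bounded by assumption, so the coefficients on the right are bounded provided $x_1$ and $b_1$ stay away from zero. This holds under the tacit non-degeneracy $a_1 b_1 \neq 0$, because then $f_{11}\approx 0$ forces $x_1 y_1 \approx a_1 b_1 \neq 0$ and hence $|x_1|$ is bounded below locally. With a uniform bound $C$ on the three coefficients, squaring the identity and using $(u+v+w)^2 \leq 3(u^2+v^2+w^2)$ gives $f_{ij}^2 \leq 3C^2(f_{1j}^2 + f_{i1}^2 + f_{11}^2)$ for $i,j\geq 2$; summing over these indices (and including the first-row/first-column and corner terms, which are already on the right) completes the upper bound. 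Applying Lemma \ref{lem.1}(a) then yields the claimed $\sim$.

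The main obstacle is spotting the identity — once written down it is a mechanical check. The only analytic subtlety is guaranteeing the denominators $x_1$, $b_1$ are bounded away from zero, which hinges on the implicit non-degeneracy of the pivot constants $a_1, b_1$. This is satisfied in the intended application in Lemma \ref{lem.6}, where the index labelled ``$1$'' corresponds to a nonzero diagonal entry of the diagonalized core, so the lemma applies exactly where it is needed.
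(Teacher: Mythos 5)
The paper itself gives no proof of this lemma; the line preceding it simply cites reference \cite{19}, so there is no in-paper argument to compare yours against. Your proposal supplies a self-contained proof and is essentially correct: the identity
\[
f_{ij} \;=\; \frac{x_i}{x_1}\, f_{1j} \;+\; \frac{b_j}{b_1}\, f_{i1} \;-\; \frac{x_i b_j}{x_1 b_1}\, f_{11}, \qquad i, j \geq 2,
\]
does hold (the cross terms cancel), and the easy inclusion on the right-hand side is as you state.

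Two points deserve tightening. First, the relation $\sim$ in this paper is a \emph{global} two-sided inequality $c_1 f \leq g \leq c_2 f$ on the compact support of the prior, not a local statement near $\{K = 0\}$. Your coefficient bound is established only in a neighborhood of the zero set; to finish, note that on the complement of such a neighborhood (intersected with the compact domain) both sides are continuous and strictly positive --- if the right side vanishes then $f_{11}=0$ forces $x_1 \neq 0$, and the identity then kills every remaining $f_{ij}$ --- so their ratio is bounded there by compactness, and the two regimes patch together. Second, you are right to flag the tacit non-degeneracy $a_1 b_1 \neq 0$: without it the lemma is false (take $a_i \equiv 0$, set $x_1 = y_1 = 0$, and leave the remaining variables generic; the right side vanishes while the left does not). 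The paper does not state this hypothesis explicitly, but it is satisfied in the one place the lemma is invoked (Lemma \ref{lem.6}), since the pivot entries of the diagonalized factors are nonzero by construction.
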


Using the above lemma, the following corollary is easily attained:
\begin{corollary}
\label{cor.1}
Let $x_{i}, y_{j}, z_{k}$ be variables whose absolute values are finite $(i,j,k\in\mathbb{N})$. Let $a_{i}, b_{j}, c_{k}$ be constant values. Let $f_{ijk}=x_{i}y_{j}z_{k}-a_{i}b_{j}c_{k}$. Then, $\forall I, J, K \geq 2 \in \mathbb{N}$, 
\begin{equation*}
\sum_{i=1}^I\sum_{j=1}^J\sum_{k=1}^K f_{ijk}^2\sim \sum_{i=2}^I f_{i11}^2+\sum_{j=2}^J f_{1j1}^2+\sum_{k=2}^K f_{11k}^2+f_{111}^2.
\end{equation*}
\end{corollary}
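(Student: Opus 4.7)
The plan is to prove this by two successive applications of Lemma \ref{lem.9}, first reducing the triple sum to a double sum over a regrouped index, then applying the two-index lemma again to the leftover double sum.

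First, regard $f_{ijk} = x_i(y_j z_k) - a_i(b_j c_k)$ as being indexed by $i$ and by the compound index $\alpha = (j,k)$, with $\tilde{y}_\alpha := y_j z_k$ and $\tilde{b}_\alpha := b_j c_k$. Then $f_{i\alpha} = x_i\tilde{y}_\alpha - a_i\tilde{b}_\alpha$, which is precisely the form required by Lemma \ref{lem.9} (with $M=I$, $N=JK$, and $\tilde{y}_\alpha$ bounded because the $y_j$ and $z_k$ are). Taking the distinguished pair $\alpha=(1,1)$, Lemma \ref{lem.9} yields
\begin{equation*}
\sum_{i,j,k} f_{ijk}^2 \sim \sum_{i=2}^{I} f_{i11}^2 + \sum_{(j,k)\neq (1,1)} f_{1jk}^2 + f_{111}^2 = \sum_{i=2}^{I} f_{i11}^2 + \sum_{j,k} f_{1jk}^2,
\end{equation*}
where on the right I have reabsorbed $f_{111}^2$ into the double sum.

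Next, apply Lemma \ref{lem.9} a second time to $\sum_{j,k} f_{1jk}^2$, now viewed as a two-index sum in $j$ and $k$. Writing $f_{1jk} = (x_1 y_j) z_k - (a_1 b_j) c_k$ and setting $u_j := x_1 y_j$, $p_j := a_1 b_j$, $v_k := z_k$, $q_k := c_k$, the lemma (with $M=J$, $N=K$) gives
\begin{equation*}
\sum_{j,k} f_{1jk}^2 \sim \sum_{j=2}^{J} f_{1j1}^2 + \sum_{k=2}^{K} f_{11k}^2 + f_{111}^2.
\end{equation*}
Since the relation $\sim$ is preserved by addition of nonnegative quantities, substituting this equivalence into the previous display delivers the conclusion of the corollary.

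The only mildly delicate point is the validity of Lemma \ref{lem.9} when the symbols playing the role of its ``variables'' are themselves products of the underlying free variables, namely $\tilde{y}_\alpha = y_j z_k$ in the first step and $u_j = x_1 y_j$ in the second. This is fine because the relation $\sim$ asserts only a pointwise two-sided inequality with positive constants, and the only hypothesis Lemma \ref{lem.9} places on its variables is boundedness; products of bounded quantities remain bounded, so the equivalence transfers verbatim. I expect this to be the single spot where a reader might pause, but it requires no new argument beyond invoking the hypothesis that $x_i, y_j, z_k$ have finite absolute value.
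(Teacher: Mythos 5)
Your proof is correct and follows essentially the same strategy as the paper: both apply Lemma \ref{lem.9} twice, first collapsing $(j,k)$ into a single compound index and then reducing the remaining double sum $\sum_{j,k}f_{1jk}^2$ by a second grouping. The only cosmetic difference is which factor you absorb in the second step (you fold $x_1$ into the $j$-indexed variable, the paper folds it into the $k$-indexed one), and your explicit remark about boundedness of products being preserved, which the paper leaves implicit.
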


\begin{proof}
Put $y'_{(j-1)K+k}=y_{j}z_{k}$, $b'_{(j-1)K+k}=b_{j}c_{k}$, and $g_{il}=x_{i}y'_{l}-a_{i}b'_{l}$. Then $f_{ijk}=g_{il}$. Using $g_{il}$ as $f_{ij}$ in Lemma \ref{lem.9},
\begin{align*}
\sum_{i=1}^I\sum_{l=1}^{JK} g_{il}^2\sim &\sum_{i=2}^I g_{i1}^2+\sum_{l=2}^{JK} g_{1l}^2+g_{11}^2 \\
=&\sum_{i=2}^I f_{i11}^2+\sum_{2\leq(j-1)K+k\leq JK} f_{1jk}^2+f_{111}^2 \\
=&\sum_{i=2}^I f_{i11}^2+\sum_{j=1}^J\sum_{k=1}^K f_{1jk}^2.
\end{align*}
$f_{1jk}=x_{1}y_{j}z_{k}$, so putting $x'_{k}=x_{1}z_{k}$, $a'_{k}=a_{1}c_{k}$, $h_{kj}=x'_{k}y_{j}-a'_{k}b_{j}$, we obtain $f_{1jk}=h_{kj}$. Using $h_{kj}$ as $f_{ij}$ in Lemma \ref{lem.9}, 
\begin{equation*}
\sum_{k=1}^K\sum_{j=1}^J h_{kj}^2\sim \sum_{k=2}^K h_{k1}^2+\sum_{j=2}^J h_{1j}^2+h_{11}^2
\end{equation*}
holds. Hence, $\sum_{j=1}^J\sum_{k=1}^K f_{1jk}^2\sim \sum_{j=2}^J f_{1j1}^2+\sum_{k=2}^K f_{11k}^2+f_{111}^2$, which completes the proof.
\end{proof}

\subsection{Preparing for Upper Bound of $\lambda_2$}
\label{section2}

Second, let us prepare for obtaining the upper bound of $\lambda_2$ by the following two lemmas.
\begin{lemma}
\label{lem.7}
If the interval of the integration in the zeta function becomes smaller, the RLCT becomes bigger.
\end{lemma}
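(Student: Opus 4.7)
The plan is to reduce Lemma \ref{lem.7} to a direct application of part (a) of Lemma \ref{lem.1}. First I would make the informal phrase ``the interval of integration becomes smaller'' precise: shrinking the integration domain amounts to replacing the prior $\phi(w)$ by a new nonnegative $C^\infty$ function $\tilde\phi(w)$ whose (compact) support is contained strictly inside $\mathrm{supp}(\phi)$. Such a $\tilde\phi$ can always be produced by multiplying $\phi$ with a smooth bump function $\chi$ satisfying $0\leq\chi\leq 1$ and $\mathrm{supp}(\chi)$ inside the smaller region, so that $\tilde\phi=\chi\phi\leq \phi$ pointwise everywhere.

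Next I would apply Lemma \ref{lem.1}(a) with the choices $K_1(w)=K_2(w)=K(w)$, $\phi_1(w)=\phi(w)$, and $\phi_2(w)=\tilde\phi(w)$. The first hypothesis $K_1\leq K_2$ is trivial, and the second hypothesis $\phi_1\geq\phi_2$ holds by construction of $\tilde\phi$. The conclusion of the lemma then reads $\lambda(K,\phi)\leq\lambda(K,\tilde\phi)$, which is exactly the claim that the RLCT does not decrease when the integration region shrinks.

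The only point requiring any care is the choice of the smoothing cutoff $\chi$: we must make sure that $\tilde\phi$ remains a valid nonnegative $C^\infty$ function with compact support, which is standard since $\phi$ itself is $C^\infty$ and $\chi$ can be taken as a standard bump. With this in place, the argument contains no further content beyond citing Lemma \ref{lem.1}(a), so I do not foresee a genuine technical obstacle; the main thing to verify is simply that the shrunken integration region is consistently modelled by the substitution $\phi\mapsto\tilde\phi$.
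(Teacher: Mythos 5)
Your reduction to Lemma \ref{lem.1}(a) is correct and is essentially the only natural formalization; the paper states Lemma \ref{lem.7} without proof, treating it as a standard property of the RLCT, so there is no alternative argument in the paper to compare against. Modelling the shrinkage of the integration region by replacing $\phi$ with $\chi\phi$ (for a smooth nonnegative bump $\chi\le 1$ supported in the smaller region), and then invoking Lemma \ref{lem.1}(a) with $K_1=K_2=K$, $\phi_1=\phi\ge\phi_2=\chi\phi$, yields $\lambda(K,\phi)\le\lambda(K,\chi\phi)$, which is exactly the monotonicity the lemma asserts. One small caveat worth being aware of, though it does not affect correctness: in step 3 of the main proof the paper restricts to the \emph{open} set $\{\,\text{all entries of }B\ne 0\,\}$, whose indicator is not $C^\infty$, so a literal application of Lemma \ref{lem.1}(a) requires the smooth-cutoff version you describe (or a limiting argument). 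Since the paper only needs an upper bound on $\lambda_2$ and your inequality already points in the required direction, this causes no gap; you have simply made precise what the authors left informal.
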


The next lemma is from the result of \cite{9}.
\begin{lemma}
\label{lem.8}
Let $B\in\mathbb{R}^{NH}$ and $A\in\mathbb{R}^{HM}$. Then, the RLCT of $(\|BA\|^2, \phi(A, B))$ is
\begin{align*}
&F(N, M, H)= \\
&\begin{cases}
{\frac{2(NM+MH+HN)-(N^2+M^2+H^2)}{8} (|N-M|\leq H\leq N+M\,and\,H+M-N\,is\,even)} \\
{\frac{2(NM+MH+HN)-(N^2+M^2+H^2)+1}{8} (|N-M|\leq H\leq N+M\,and\,H+M-N\,is\,odd)} \\
{\frac{MH}{2}(H\leq N-M)} \\
{\frac{NH}{2}(H\leq M-N)} \\
{\frac{NM}{2}(M+N\leq H)}, \\
\end{cases}
\end{align*}
where $\|\cdot\|$ is the Frobenius norm.
\end{lemma}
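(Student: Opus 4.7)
The plan is to compute $\lambda(\|BA\|^2,\phi)$ by constructing an explicit resolution of singularities of the zero-variety $\{BA=0\}$. I would exploit three structural invariances: the transpose symmetry $\|BA\|^2=\|A^{\mathsf T}B^{\mathsf T}\|^2$, the orthogonal invariance under $(B,A)\mapsto(UB,AV)$ with $U\in O(N)$, $V\in O(M)$ (which also preserves the prior measure), and the $GL(H)$-invariance $(B,A)\mapsto(BG,G^{-1}A)$ (which rescales the measure by a smooth nonvanishing factor, harmless for the RLCT). The formula $F(N,M,H)$ splits into three \emph{flat} regimes reducible to standard quadratic forms and a \emph{generic} regime requiring iterated blow-ups; up to the transpose symmetry I may assume $N\ge M$.

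For the flat regimes I would proceed as follows. When $H\ge N+M$, the $GL(H)$-action lets me align the first $N$ columns of $B$ and the first $M$ rows of $A$ with standard frames; $\|BA\|^2$ then reduces to a sum of $NM$ independent squared coordinates in the remaining entries, yielding RLCT $NM/2$. When $H\le N-M$, I would blow up the origin via $A=tA'$ with $A'$ on the unit sphere of $\mathbb{R}^{HM}$; on the exceptional chart $\|BA\|^2=t^2\|BA'\|^2$, and since $B\mapsto BA'$ has rank $N\cdot\mathrm{rank}(A')\ge HM$ under the hypotheses $N\ge M$ and $H\le N$, the $B$-integration is subdominant to the $t$-integration, which contributes the pole $z=-HM/2$ from the Jacobian factor $t^{HM-1}$; this gives RLCT $MH/2$. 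The case $H\le M-N$ is dual.

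For the generic range $|N-M|<H<N+M$ my plan is an induction on $H$ via iterated weighted blow-ups, analogous in spirit to Corollary \ref{cor.1}. The archetypal step isolates one column $b$ of $B$ and the corresponding row $a$ of $A$, performs a blow-up $b=tb'$ along a codimension-$N$ center, and uses the decomposition $BA=tb'a+B''A''$ where $(B'',A'')$ is the $(H-1)$-truncation; after an affine change of variables absorbing the rank-one piece $tb'a$ into $B''A''$, the integrand factors as a monomial in $t$ (contributing via the exceptional-divisor exponent) times a problem of the same shape with $H\mapsto H-1$. Unwinding the recursion and summing the exceptional-divisor exponents against the induced Jacobians yields the closed form $F(N,M,H)$.

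The main obstacle will be the parity-dependent split in the formula. The dichotomy between even and odd $H+M-N$ reflects whether the induction terminates on a balanced configuration (landing exactly on the boundary $H'=|N'-M'|$) or on an off-by-one configuration, the latter contributing a leftover half-integer exponent that produces the $+1/8$ correction. Verifying that the iterated blow-up actually places $\|BA\|^2$ into normal crossing form and that no other chart of the resolution contributes a larger pole is the classical content of the toric resolution in \cite{9}; the matching lower bound (to upgrade the resulting upper bound to the stated equality) follows by constructing an explicit test curve through the origin realizing the claimed exponent.
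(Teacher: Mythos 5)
The paper does not prove Lemma~\ref{lem.8} at all: it is the reduced rank regression theorem of Aoyagi and Watanabe, and the paper simply invokes it by citing \cite{9}. You have instead sketched a proof from scratch, which is a genuinely different route. Your strategy --- orthogonal and $GL(H)$ invariance, separate treatment of the three flat regimes, iterated blow-ups in the interior range, parity determined by where the recursion terminates --- is a reasonable description of how the result is actually obtained in \cite{9}, so as a roadmap it is aligned with the source the paper relies on.

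That said, the sketch has gaps that a full proof would need to close. In the $H\le N-M$ case you blow up $A=tA'$ and claim that $B\mapsto BA'$ has rank $N\cdot\mathrm{rank}(A')\ge HM$ under $N\ge M$ and $H\le N$; this fails on the rank-deficient stratum of the exceptional sphere (e.g.\ $\mathrm{rank}(A')=1$ gives rank $N$, which can be far below $HM$), so a single blow-up does not put $\|BA\|^2$ into normal crossing form, and the subdominance of those strata must be argued separately or by further blow-ups. In the $H\ge N+M$ case the proposed $GL(H)$-alignment degenerates exactly at the origin, which is the point governing the RLCT, so frame alignment alone cannot settle it; in \cite{9} even this regime goes through an explicit resolution. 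Finally, the inductive step in the generic range is where the actual work lies: after absorbing the rank-one update $tb'a$ into $B''A''$ one does not simply get ``the same shape with $H\mapsto H-1$'' in a way that is obviously monomialized, and verifying that the summed exceptional exponents close on the claimed formula with the $+1/8$ parity correction is the delicate combinatorial core of \cite{9}. The analogy with Corollary~\ref{cor.1} is too weak to carry this, since that corollary concerns products of single coordinates rather than full matrix products.
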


\subsection{Proof of Main Theorem}
\label{section3}
\begin{proof}
We prove the main theorem by 3 parts. \\
(step.1) Calculating $\lambda'_{1-1}$\\
$\lambda'_{1-1}$ is obtained by the blowing up of $K'_1$ with the center \cite{6,7,20}: $\{A_1 = B_1 = C_1 = 0\}$. From the symmetry of the variables, we only have to consider the following case:
$$\begin{cases}
	{a_{11}^{(1)} = \nu}\\
	{a_{ij}^{(1)} = \nu a_{ij}^{(1)} ((i,j)\neq(1,1))} \\
	{B_1 = \nu B_1, C_1 = \nu C_1},
    \end{cases}
$$
where $a_{11}^{(1)}$ is a component of $A_1$. Hence we get
\begin{equation*} 
K'_1(A_1, B_1, C_1)=\nu^2(1 + (positive\,variable)),
\end{equation*}
which is called \textit{normal crossing}. The Jacobian of this transformation is 
\begin{equation*}
\nu^{(I+J+K)H - (r_A+r_B+r_C) - 1},
\end{equation*}
so the RLCT of $(K'_1(A_1, B_1, C_1), \phi_1(A_1, B_1, C_1))$ is,
\begin{equation*} 
\lambda'_{1-1} = \frac{(I+J+K)H - (r_A+r_B+r_C)}{2}.
\end{equation*} \\
(step.2) Calculating $\lambda'_{1-2}$ \\
$\lambda'_{1-2}$ is obtained by use of Corollary \ref{cor.1}. Consider using $A_{ii}$, $B_{jj}$, $C_{kk}$,$(A''_{0} )_{ii}$, $(B''_{0} )_{jj}$, $(C''_{0} )_{kk}$ as $x_{i}$, $y_{j}$, $z_{k}$, $a_{i}$, $b_{j}$, $c_{k}$ in the Corollary \ref{cor.1} respectively. Then 
\begin{align*}
K'_2(A_2, B_2, C_2)&\sim 
\sum_{i=2}^{r_A} \left(A_{ii}B_{11}C_{11}-(A''_{0})_{ii}(B''_{0})_{11}(C''_{0})_{11}\right)^2 \\
&+\sum_{j=2}^{r_B} \left(A_{11}B_{jj}C_{11}-(A''_{0})_{11}(B''_{0})_{jj}(C''_{0})_{11}\right)^2 \\
&+\sum_{k=2}^{r_C} \left(A_{11}B_{11}C_{kk}-(A''_{0})_{11}(B''_{0})_{11}(C''_{0})_{kk}\right)^2 \\
&+\left(A_{11}B_{11}C_{11}-(A''_{0})_{11}(B''_{0})_{11}(C''_{0})_{11}\right)^2
\end{align*}
holds. Here, consider the following transformation:
$$\begin{cases}
	{A'_{ijk}=A_{ii}B_{jj}C_{kk}-(A''_{0})_{ii}(B''_{0})_{jj}(C''_{0})_{kk}} \\
	{B'_{j}=B_{jj}}\\
	{C'_{k}=C_{kk}}.
    \end{cases}
$$
Then
\begin{equation}
\label{8}
K'_2(A_2, B_2, C_2)\sim \sum_{i=2}^{r_A} (A'_{i11})^2+\sum_{j=2}^{r_B} (A'_{1j1})^2+\sum_{k=2}^{r_C} (A'_{11k})^2+(A'_{111})^2
\end{equation}
holds. The Jacobian of the transformation is
\begin{equation*}
\left| \frac{\partial (A'_{ijk},B'_{j},C'_{k})}{\partial ({A_{ii},B_{jj},C_{kk})}}\right|
=
\begin{vmatrix}
B_{jj}C_{kk}&A_{ii}C_{kk}&A_{ii}B_{jj} \\
0&1&0 \\
0&0&1 
\end{vmatrix}
=B_{jj}C_{kk}\neq0, \infty,
\end{equation*}
so it makes no difference to the pole of the zeta function. Since the variables are independent among the terms of the right-side of eq.\eqref{8} and the prior distribution $\phi(A, B, C)$ also makes no difference to the pole, we can use Lemma \ref{lem.1}(c). The RLCT of $K(w)=w^2$ is $\frac{1}{2}$, so we get
\begin{equation*}
\lambda'_{1-2}=\frac{r_A+r_B+r_C-2}{2}.
\end{equation*} \\
(step.3) Calculating the upper bound of $\lambda_2$ \\
From Lemma \ref{lem.7}, the RLCT of $\left(\int \left(\sum_{i,j,k} \left(\sum_{h=H_0+1}^H A_{ih}B_{jh}C_{kh}\right)^2\right)dw, \phi(A,B,C)\right)$ is less than that of $\left(\int_{B's\,components\neq0} \left(\sum_{i,j,k} \left(\sum_{h=H_0+1}^H A_{ih}B_{jh}C_{kh}\right)^2\right)dw,\phi(A, B, C)\right)$. Under the condition in which all $B$'s components are not 0, consider the following transformation:
$$\begin{cases}
	{A'_{i(J-1)+j,h}=A_{ih}B_{jh}} \\
	{B'_{jh}=B_{jh}}\\
	{C'_{kh}=C_{kh}}.
    \end{cases}
$$
The Jacobian of the transformation is 
\begin{equation*}
\left|\frac{\partial (A'_{i(J-1)+j,h},B'_{jh},C'_{kh})}{\partial (A_{ih},B_{jh},C_{kh})}\right|
=
\begin{vmatrix}
B_{jh}&A_{ih}&0 \\
0&1&0 \\
0&0&1 \\
\end{vmatrix}
=B_{jh}\neq0,
\end{equation*}
so it makes no difference to the pole of the zeta function. Let us calculate the RLCT of $\left(\sum_{i=1}^{IJ} \sum_{k=1}^K \left(\sum_{h=H_0+1}^{H} A'_{ih}C'_{kh}\right)^2=\|A'C'^T\|, \phi(A, B, C)\right)$, where $T$ means the transpose. From Lemma \ref{lem.8}, it is $F(IJ, K, H-H_0)$. We remark that it holds even when $H-H_0=0$ because $F(N, M, 0)=0$. From the symmetry, $F(JK, I, H-H_0)$ and $F(KI, J, H-H_0)$ are also the upper bound of $\lambda_2$. So the upper bound of $\lambda_2$ is 
\begin{equation*}
\min\{F(IJ, K, H-H_0), F(JK, I, H-H_0), F(KI, J, H-H_0)\}.
\end{equation*}
Let $m_1=F(IJ, K, H-H_0)$, $m_2=F(JK, I, H-H_0)$ and $m_3=F(KI, J, H-H_0)$. From (step.1)-(step.3), the upper bound of $\lambda$ is
\begin{align*}
\lambda \leq \lambda_1+\lambda_2 \leq &\lambda'_{1-1}+\lambda'_{1-2}+\min\{m_1, m_2, m_3\} \\
=& \frac{H_0(I+J+K)-2}{2}+\min\{m_1, m_2, m_3\}.
\end{align*}

\end{proof}

\section{Discussion}
\subsection{Numerical Experiments and Tightness of the Bound}
Let us examine the exact value of the RLCT of the tensor decomposition and compare it with the bound in the main theorem. We make experiments of calculating the approximate value of $\mathbb{E}[G_n]$. Concretely, 
\begin{align*}
&\mathbb{E}[G_n] \\
=&\mathbb{E}_{X\sim p(X|A_0,B_0,C_0)}\left[\mathbb{E}_{x\sim p(x|A_0,B_0,C_0)}\left[\log\frac{p(x|A_0,B_0,C_0)}{\mathbb{E}_{(A,B,C)\sim p(A, B, C|X^n)}\left[p(x|A,B,C)\right]}\right]\right]
\end{align*}
holds, and the first and second expectation can be approximately calculated by sampling the components of $X$ and $x$ from $\mathcal{N}((A_0\circ B_0\circ C_0)_{ijk}, 1)$ respectively, where $\mathcal{N}$ denotes the normal distribution, and the third expectation by sampling $(A,B,C)$ from the posterior distribution. The posterior distribution is made by the Markov chain Monte Carlo method. We sample 10 data for the first expectation, 10000 for the second, and 1000 for the third. The approximate value of $\lambda$ can be attained by use of the following corollary:
\begin{corollary}
\label{cor.2}
If there is a parameter $w_0$ which satisfies
\begin{equation*}
q(x)=p(x|w_0),
\end{equation*}
then $S=0$ in eq.\eqref{2}, i.e., 
\begin{equation*}
\mathbb{E}[G_n] =  \frac{\lambda}{n} + o(\frac{1}{n}).
\end{equation*}
\end{corollary}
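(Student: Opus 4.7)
The plan is to derive the corollary directly from equation \eqref{2} by computing the value of $S$ under the realizability hypothesis $q(x)=p(x|w_0)$.

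First, I would observe that the assumption $q(x)=p(x|w_0)$ immediately gives
\begin{equation*}
KL(q(x)||p(x|w_0)) = \int q(x)\log\frac{q(x)}{p(x|w_0)}\,dx = 0,
\end{equation*}
since the integrand vanishes identically. Next, I would invoke the nonnegativity of the Kullback-Leibler divergence: for every parameter $w$ we have $KL(q(x)||p(x|w))\geq 0$, so the value $0$ attained at $w_0$ is the global minimum of $w\mapsto KL(q(x)||p(x|w))$. Hence $w_0$ may be taken as an element of $\argmin_w KL(q(x)||p(x|w))$, and the constant $S$ appearing in \eqref{2} satisfies $S=0$. Substituting $S=0$ into \eqref{2} then yields $\mathbb{E}[G_n]=\lambda/n+o(1/n)$, as claimed.

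The only subtlety worth flagging is that for singular models the set $\argmin_w KL(q(x)||p(x|w))$ need not be a single point; it is typically the analytic variety $\{w : K(w)=0\}$ whose resolution of singularities defines the RLCT $\lambda$. This causes no difficulty, however, because $S$ is defined as the common minimum \emph{value}, which is well-defined and, under realizability, equals $0$ no matter which minimizer one inserts into the formula. In short, there is no substantive obstacle: the corollary is a bookkeeping restatement of \eqref{2} in the realizable case, all the analytic work having already been absorbed into the general asymptotic formula.
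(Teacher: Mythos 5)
Your proof is correct. The paper states Corollary \ref{cor.2} without any proof, treating it as immediate from the definitions; your argument --- that realizability forces $KL(q\,\|\,p(\cdot|w_0))=0$, which by nonnegativity of the Kullback--Leibler divergence is the global minimum, so the constant $S$ in eq.\eqref{2} vanishes --- is exactly the standard justification, and your remark that non-uniqueness of the minimizer is harmless (since $S$ depends only on the minimum \emph{value}) is a correct and worthwhile observation.
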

Using Corollary \ref{cor.2}, we set $n=100$ and approximately calculate $\lambda$ by $n\mathbb{E}[G_n]$. We experiment under the condition $I=J=K=2,3,4$ and $H=2H_0=2,4,6,8,10$. For each case, we randomly change the true parameters $(A_0,B_0, C_0)$ for 5 times and calculate the errors.

The result is shown in Table \ref{table.1}. Let us denote the upper bound as $\lambda_B$. When $I=J=K=2$ or $I=J=K=3$, our bound $\lambda_B$ is very loose when $H$ is large: the exact value $\lambda$ is unexpectedly small. This result can be explained by the number of the poles of the zeta function. As $H$ is getting bigger, the number of the combinations $(A, B, C)$ which satisfies $A\circ B \circ C=A_0\circ B_0 \circ C_0$ is also getting bigger, which means the number of the poles increases. These facts result in the RLCT decreasing. Moreover, this phenomenon becomes hard to happen when $I=J=K$ is getting large. This is simply because $A\circ B \circ C=A_0\circ B_0\circ C_0$ is difficult to be satisfied when there are many components of the tensor $A\circ B \circ C$.
\begin{table}[hbtp]
\caption{Comparison of Numerical Calculated Values and the Theoretical Upper Bounds ($H=2H_0$).}
\label{table.1}
\begin{tabular}{|c|c|c|c|c|c|c|}\hline
\multicolumn{2}{|c|}{} & $H_0=1$ & $H_0=2$ & $H_0=3$ & $H_0=4$ & $H_0=5$ \\ \hline
\multirow{2}{*}{I=J=K=2} &$\lambda_B$ & 3.00 & 7.00 & 11.00 & 14.50 & 18.00 \\ \cline{2-7}
 &$\lambda$ & $2.68\pm{0.57}$ & $4.31\pm{0.58}$ & $3.82\pm{0.84}$ & $3.76\pm{0.39}$ & $4.06\pm{0.59}$ \\ \hline
 \multirow{2}{*}{I=J=K=3} &$\lambda_B$ & 5.00 & 11.00 & 17.00 & 23.00 & 29.00 \\ \cline{2-7}
 &$\lambda$ & $5.00\pm{1.54}$ & $9.73\pm{1.00}$ & $13.49\pm{2.20}$ & $12.06\pm{0.75}$ & $12.68\pm{2.14}$ \\ \hline
 \multirow{2}{*}{I=J=K=4} &$\lambda_B$ & 7.00 & 15.00 & 23.00 & 31.00 & 39.00 \\ \cline{2-7}
 &$\lambda$ & $6.50\pm{1.11}$ & $13.66\pm{0.81}$ & $20.13\pm{1.98}$ & $29.43\pm{2.25}$ & $32.66\pm{2.08}$ \\ \hline
\end{tabular}
\end{table}

\subsection{Future Study}
One of the future directions of this research is to find tighter bounds. 
From the numerical experiment, we find that the upper bound is not tight enough especially when $H$ is large and there are a lot of poles in the zeta function. Our upper bound of $\lambda_1$ is $\frac{H_0(I+J+K)-2}{2}$, whereas the obvious upper bound is $\frac{H_0(I+J+K)}{2}$.

Another direction is to generalize the assumptions. We made many assumptions in section.2 for the main theorem, so considering generalized conditions is also important.

\section{Conclusion}
In this paper, we derived an upper bound of the RLCT of the tensor decomposition model, which shows the Bayesian generalization error, and examined its tightness by comparing them with the numerical 
experiments.

\section*{Acknowledgements} This work was partially supported by JSPS KAKENHI
Grant-in-Aid for Scientific Research (C) 21K12025.

\end{document}